\documentclass[sigconf, table]{aamas}

\AtBeginDocument{%
  \providecommand\BibTeX{{%
    \normalfont B\kern-0.5em{\scshape i\kern-0.25em b}\kern-0.8em\TeX}}}

\usepackage{multirow}
\usepackage{array}
\usepackage{booktabs}
\usepackage{amsthm}
\usepackage{dsfont}
\usepackage{stmaryrd}
\usepackage{amsfonts}
\usepackage{mathtools}
\usepackage{wrapfig}
\usepackage[inline]{enumitem}
\usepackage{booktabs}
\usepackage{xcolor}
\usepackage{xargs}
\usepackage{enumitem}
\usepackage{nth}
\usepackage{flushend}
\setcopyright{ifaamas}  \copyrightyear{2020} \acmYear{2020} \acmDOI{} \acmPrice{} \acmISBN{} \acmConference[AAMAS'20]{Proc.\@ of the 19th International Conference on Autonomous Agents and Multiagent Systems (AAMAS 2020)}{May 9--13, 2020}{Auckland, New Zealand}{B.~An, N.~Yorke-Smith, A.~El~Fallah~Seghrouchni, G.~Sukthankar (eds.)}

\DeclareMathOperator*{\argmax}{argmax}
\usepackage{dsfont}

\begin{document}

\title{Safe Policy Improvement with an Estimated Baseline Policy}

\author{Thiago D. Sim\~{a}o}
\authornote{Work done while interning at Microsoft Research Montr\'{e}al.}
\affiliation{  \institution{Delft University of Technology}
  \city{The Netherlands}
    }
\email{t.diassimao@tudelft.nl}

\author{Romain Laroche}
\affiliation{  \institution{Microsoft Research Montr\'{e}al}
  \city{Canada}
    }
\email{romain.laroche@microsoft.com }

\author{ R\'{e}mi Tachet des Combes}
\affiliation{  \institution{Microsoft Research Montr\'{e}al}
  \city{Canada}
    }
\email{remi.tachet@microsoft.com }

\begin{abstract}
Previous work has shown the unreliability of existing algorithms in the batch Reinforcement Learning setting, and proposed the theoretically-grounded Safe Policy Improvement with Baseline Bootstrapping (SPIBB) fix: reproduce the baseline policy in the uncertain state-action pairs, in order to control the variance on the trained policy performance. However, in many real-world applications such as dialogue systems, pharmaceutical tests or crop management, data is collected under human supervision and the baseline remains unknown. In this paper, we apply SPIBB algorithms with a baseline estimate built from the data. We formally show safe policy improvement guarantees over the true baseline even without direct access to it. Our empirical experiments on finite and continuous states tasks support the theoretical findings. It shows little loss of performance in comparison with SPIBB when the baseline policy is given, and more importantly, drastically and significantly outperforms competing algorithms both in safe policy improvement, and in average performance.
\end{abstract}

\maketitle

\section{Introduction}
Reinforcement Learning (RL) is a framework for sequential decision-making optimization. Most RL research focuses on the online setting, where the system directly interacts with the environment and learns from it \cite{Mnih2015,vanSeijen2017}. While this setting might be the most efficient in simulation and in uni-device system control such as drones or complex industrial flow optimization, most real-world tasks (RWTs) involve a distributed architecture. We may cite a few: distributed devices (Internet of Things), mobile/computer applications (games, dialogue systems), or distributed lab experiments (pharmaceutical tests, crop management). These RWTs entail a high parallellization of the trajectory collection and strict communication constraints both in bandwidth and in privacy~\cite{Feraud2019}. Rather than spending a small amount of computational resource after each sample/trajectory collection, it is therefore more practical to collect a dataset using a behavioral (or baseline) policy, and then train a new policy from it. This setting is called \textit{batch RL}~\cite{Lange2012}.

Classically, batch RL algorithms apply dynamic programming on the samples in the dataset~\cite{Lagoudakis2003,Ernst2005}. \citet{Laroche2019} showed that in finite-state Markov Decision Processes (MDPs), these algorithms all converge to the same policy: the one that is optimal in the MDP with the maximum likelihood given the batch of data.
\citet{Petrik2016} show that this policy is approximately optimal to the order of the inverse square root of the minimal state-action pairs count in the dataset. Unfortunately, \citet{Laroche2019} show that even on very small tasks this minimal amount is almost always zero, and that, as a consequence, it gravely impairs the reliability of the approach: dynamic programming on the batch happens to return policies that perform terribly in the real environment. If a bad policy were to be run in distributed architectures such as the aforementioned ones, the consequences would be disastrous as it would jeopardize a high number of systems, or even lives.

Several attempts have been made to design reliable batch RL algorithms, starting with robust MDPs~\cite{Iyengar2005,Nilim2005}, which consist of considering the set of plausible MDPs given the dataset, and then find the policy for which the minimal performance over the robust MDPs set is maximal. The algorithm however tends to converge to policies that are unnecessarily conservative.

\citet{Xu2009} considered robust regret over the optimal policy: the algorithm searches for the policy that minimizes the maximal gap with respect to the optimal performance in every MDP in the robust MDPs. However, they proved that evaluating the robust optimal regret for a fixed policy is already NP-complete with respect to the state and action sets' size and the uncertainty constraints in the robust MDPs set.

Later, \citet{Petrik2016} considered the regret with respect to the behavioural policy performance over the robust MDPs set. The behavioural policy is called \textit{baseline} in this context. Similarly, they proved that simply evaluating the robust baseline regret is already NP-complete. Concurrently, they also proposed, without theoretical grounding, the Reward-adjusted MDP algorithm (RaMDP), where the immediate reward for each transition in the batch is penalized by the inverse square root of the number of samples in the dataset that have the same state and action than the considered transition.

Recently, \citet{Laroche2019} proposed Safe Policy Improvement with Baseline Bootstrapping (SPIBB), the first tractable algorithm with approximate policy improvement guarantees. Its principle consists in guaranteeing safe policy improvement by constraining the trained policy as follows: it has to reproduce the baseline policy in the uncertain state-action pairs. \citet{Nadjahi2019} further improved SPIBB's empirical performance by adopting soft constraints instead. Related to this track of research, \citet{Dias2019aaai,Dias2019ijcai} also developed SPIBB algorithms specifically for factored MDPs. Note that this thread of research is very distinct from online safe policy iteration, such as \cite{Kakade2002,Pirotta2013,Schulman2015,Schulman2017,Papini2017}, because the online setting allows them to perform very conservative updates.

Concurrently to robust approaches described above, another tractable and theoretically-grounded family of frequentist algorithms appeared under the name of High Confidence Policy Improvement~\cite[HCPI]{Paduraru2013,Mandel2014,thomas2015high}, relying on importance sampling estimates of the trained policy performance. The algorithm by \citet{Mandel2014}, based on concentration inequalities, tends to be conservative and requires hyper parameters optimization. The algorithms by \citet{thomas2015high2} rely on the assumption that the importance sampling estimate is normally distributed which is false when the number of trajectories is small. The algorithm by \citet{Paduraru2013} is based on bias corrected and accelerated bootstrap and tends to be too optimistic. In contrast with the robust approaches, from robust MDPs to Soft-SPIBB, HCPI may be readily applied to infinite MDPs with guarantees. However, it is well known that the importance sampling estimates have high variance, exponential with the horizon of the MDP. The SPIBB algorithm has a linear horizon dependency, given a fixed known maximal value and the common horizon/discount factor equivalence: $H=\frac{1}{1-\gamma}$~\cite{Kocsis2006}. Soft-SPIBB suffers a cubic upper bound but the empirical results rather indicate a linear dependency.

\citet{Nadjahi2019} perform a benchmark on randomly generated finite MDPs, baselines, and datasets. They report that the SPIBB and Soft-SPIBB algorithms are significantly the most reliable, and tie with RaMDP as the highest average performing algorithms. Additionally, they perform a benchmark on a continuous state space task, where the SPIBB and Soft-SPIBB algorithms significantly outperform RaMDP and Double-DQN~\cite{vanHasselt2016} both in reliability and average performance. Soft-SPIBB particularly shines in the continuous state experiments.

Despite these appealing results, there is a caveat: the SPIBB and Soft-SPIBB algorithms requires the baseline policy as input.
However, the behavior policy is not always available.
Consider for instance application involving human interactions, such as dialogue systems~\cite{Serban2016} and the medical sector.
In these situations it is common to have access to the observations and actions that were taken in a trajectory but not the policy that was followed.
To overcome this issue, \textit{we investigate the use of SPIBB and Soft-SPIBB algorithms in the setting where the baseline policy is unknown}.

Our aim is to answer a very natural question arising from the existing SPIBB analysis, whether access to the baseline is required or not.
Therefore, our contributions are threefold:
\begin{enumerate}
    \item We formally prove safety bounds for SPIBB and Soft-SPIBB algorithms with estimated baseline policies in finite MDPs (Section~\ref{sec:empiric}).
    \item We consolidate the theoretical results with empirical results in finite randomly generated MDPs, unknown baselines, and datasets (Section~\ref{sec:finite}, \url{https://github.com/RomainLaroche/SPIBB}).
    \item We apply the method on a continuous state task by investigating two types of behavioural cloning, and show that it outperforms competing algorithms by a large margin, in particular on small datasets (Section~\ref{sec:helico}, \url{https://github.com/rems75/SPIBB-DQN}).
\end{enumerate}
In summary, our results bring the SPIBB framework a step closer to many RWTs where the behavior policy is unknown.

\section{Background}
\label{sec:background}
This section reviews the previous technical results relevant for this work.

\subsection{Preliminaries}
A Markov Decision Process (MDP) is the standard formalism to model sequential decision making problems in stochastic environments.
An MDP $M$ is defined as $M = \langle \mathcal{X}, \mathcal{A}, P, R, \gamma \rangle$, where
    $\mathcal{X}$ is the state space,
    $\mathcal{A}$ is the set of actions the agent can execute,
    $P: \mathcal{X} \times \mathcal{A} \rightarrow \Delta_{\mathcal{X}}$ is the stochastic transition function,
    $R: \mathcal{X} \times \mathcal{A} \rightarrow [-R_{\max}, R_{\max}]$ is a stochastic immediate reward function,
    $\gamma$ is the discount factor. Without loss of generality, we assume that the initial state is deterministically $x_i$.

A policy $\pi : \mathcal{X} \rightarrow \Delta_{\mathcal{A}}$ represents how the agent interacts with the environment.
The value of a policy $\pi$ starting from a state $x \in \mathcal{X}$ is given by the expected sum of discounted future rewards:
\begin{align}
V^{\pi}_{M}(x)=\mathbb{E}_{\pi, M, x_0=x} \left[ \sum_{t \geq 0} \gamma^t R(x_t, a_t)\right].
\end{align}

Therefore, the performance of a policy, denoted $\rho(\pi, M)$, is the value in the initial state $x_i$.
The goal of a reinforcement learning agent is to find a policy $\pi: \mathcal{X} \rightarrow \Delta_{\mathcal{A}}$ that maximizes its expected sum of discounted rewards, however the agent does not have access to the dynamics of the true environment $M^* = \langle \mathcal{X}, \mathcal{A}, P^*, R^*, \gamma \rangle$.

In the batch RL setting, the algorithm receives as an input the dataset of previous transitions collected by executing a baseline policy $\pi_b$: $\mathcal{D} = \langle x_k,a_k, r_k, x'_k, t_k\rangle_{k\in\llbracket 1,|\mathcal{D}|\rrbracket}$, where the starting state of the transition is $x_k = x_i$ if $t_k = 0$ and $x_k = x'_{k-1}$ otherwise, $a_k \sim \pi_b(\cdot|x_k)$ is the performed action,  $r_{k} \sim R(x_k, a_k)$ is the immediate reward, $x'_{k} \sim P(\cdot|x_k, a_k)$ is the reached state, and the trajectory-wise timestep is $t_k=0$ if the previous transition was final and $t_k=t_{k-1}+1$ otherwise.

We build from a dataset $\mathcal{D}$ the Maximum Likelihood Estimate (MLE) MDP $\widehat{M} = \langle \mathcal{X}, \mathcal{A}, \widehat{P}, \widehat{R}, \gamma \rangle$, as follows:
\begin{align*}
    \widehat{P}(x'|x,a) &= \cfrac{N_\mathcal{D}(x,a,x')}{N_{\mathcal{D}}(x,a)}, \\
    \widehat{R}(x,a) &= \cfrac{\sum_{\langle x_j=x,a_j=a,r_j,x'_j \rangle\in\mathcal{D}} r_j}{N_{\mathcal{D}}(x,a)},
\end{align*}
where $N_{\mathcal{D}}(x,a) \text{ and } N_\mathcal{D}(x,a,x') $
are the state-action pair counts and next-state counts in the dataset $\mathcal{D}$. We also consider the robust MDPs set $\Xi$, \textit{i.e.} the set of plausible MDPs such that the true environment MDP $M^*$ belongs to it with high probability $1-\delta$:
\begin{align}
\begin{split}
  \Xi = &\left\{M = \langle \mathcal{X}, \mathcal{A}, R, P, \gamma\rangle
  \textnormal{ s.t. } \forall x,a, \right.\\
  &\left.\begin{array}{ll}
  ||P(\cdot|x,a)-\widehat{P}(\cdot|x,a)||_1 \leq e_\delta(x,a),\\
  |R(x,a)-\widehat{R}(x,a)| \leq e_\delta(x,a)R_{max}
  \end{array}\right\},
\end{split}
\end{align}
where $e_\delta(x,a)$ is a model error function on the estimates of $\widehat{M}$ for a state-action pair~$(x,a)$, which is classically upper bounded with concentration inequalities.

In the next section, we discuss an objective for these algorithms that aims to guarantee a safe policy improvement for the new policy.

\subsection{Approximate Safe Policy Improvement}
\citet{Laroche2019} investigate the setting where the agent receives as input the dataset $\mathcal{D}$ and must compute a new policy $\pi$ that approximately improves with high probability the baseline. Formally, the safety criterion can be defined as:
\begin{align}
\mathbb{P}\left(\rho(\pi, M^*) \geq \rho(\pi_b, M^*) - \zeta \right) \geq 1-\delta,
\end{align}
where $\zeta$ is a hyper-parameter indicating the improvement approximation and $1-\delta$ is the high confidence hyper-parameter. \citet{Petrik2016} demonstrate that the optimization of this objective is NP-hard. To make the problem tractable, \citet{Laroche2019} end up considering an approximate solution by maximizing the policy in the MLE-MDP while constraining the policy to be approximately improving in the robust MDPs set $\Xi$. More formally, they seek:
\begin{align*}
  \argmax_{\pi} \rho(\pi,\widehat{M}),  \textnormal{ s.t. } \forall M \in \Xi, \rho(\pi,M) \geq \rho(\pi_b,M) - \zeta.
\end{align*}

Given a hyper-parameter $N_\wedge$, their algorithm $\Pi_b$-SPIBB constrains the policy search to the set $\Pi_{b}$ of policies that reproduce the baseline probabilities in the state-action pairs that are present less than $N_\wedge$ times in the dataset $\mathcal{D}$:
\begin{align}
  \Pi_{b} = \left\{\pi \,\middle\vert\,\pi(a|x) = \pi_b(a|x)\;\textnormal{if}\;N_{\mathcal{D}}(x,a)<N_\wedge\right\}.
\end{align}

We now recall the safe policy improvement guaranteed by the algorithm $\Pi_b$-SPIBB:\begin{theorem}[Safe policy improvement with baseline bootstrapping]
  Let $\pi_b^*$ be the optimal policy constrained to $\Pi_b$ in the MLE-MDP. Then, $\pi_b^*$ is a $\zeta$-approximate safe policy improvement over the baseline $\pi_b$ with high probability $1-\delta$, where:
  \begin{equation*}
  \zeta = \cfrac{4 V_{max}}{1-\gamma} \sqrt{\cfrac{2}{N_{\wedge}}\log\cfrac{2|\mathcal{X}||\mathcal{A}|2^{|\mathcal{X}|}}{\delta}}  - \rho(\pi^*_b, \widehat{M}) + \rho(\pi_b, \widehat{M}).
  \end{equation*}
  \label{th:safepolicyimprovement-pi}
\end{theorem}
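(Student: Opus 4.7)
My plan is to combine three ingredients: (i) a high-probability concentration bound placing $M^*$ inside the robust set $\Xi$, (ii) a simulation-lemma decomposition linking $\rho(\pi,M^*)$ and $\rho(\pi,\widehat{M})$, and (iii) the structural property that $\pi_b^*$ and $\pi_b$ agree action-by-action on the bootstrapping set $\mathcal{B}=\{(x,a):N_\mathcal{D}(x,a)<N_\wedge\}$.

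First, I would instantiate the model-error function $e_\delta(x,a)$ by applying Weissman's $L_1$ concentration inequality to the empirical transition distributions and Hoeffding's inequality to the empirical rewards, then taking a union bound over all $|\mathcal{X}||\mathcal{A}|$ state-action pairs and over both estimated quantities per pair; the factor $2|\mathcal{X}||\mathcal{A}|2^{|\mathcal{X}|}$ in the logarithm accounts precisely for this union bound and for the Weissman $2^{|\mathcal{X}|}$ constant. On the resulting event of probability at least $1-\delta$, one obtains simultaneously for every $(x,a)$ the bound $e_\delta(x,a)=\sqrt{\tfrac{2}{N_\mathcal{D}(x,a)}\log\tfrac{2|\mathcal{X}||\mathcal{A}|2^{|\mathcal{X}|}}{\delta}}$, which in particular places $M^*\in\Xi$.

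Next, I would decompose
\begin{equation*}
  \rho(\pi_b^*,M^*)-\rho(\pi_b,M^*) = \bigl[\rho(\pi_b^*,\widehat{M})-\rho(\pi_b,\widehat{M})\bigr] + \bigl[\rho(\pi_b^*,M^*)-\rho(\pi_b^*,\widehat{M})\bigr] + \bigl[\rho(\pi_b,\widehat{M})-\rho(\pi_b,M^*)\bigr],
\end{equation*}
keeping the first bracket as-is, since it contributes the $-\rho(\pi_b^*,\widehat{M})+\rho(\pi_b,\widehat{M})$ piece of $\zeta$, and handling the two outer brackets via the simulation lemma. The simulation lemma rewrites each outer gap as $\sum_{x,a}d^\pi_{M^*}(x,a)\bigl[(R^*-\widehat{R})(x,a)+\gamma\sum_{x'}(P^*-\widehat{P})(x'|x,a)V^\pi_{\widehat{M}}(x')\bigr]$, each summand of which is bounded in absolute value by $e_\delta(x,a)\cdot O(V_{\max})$ by the concentration step.

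The hard part will be converting the per-pair error $e_\delta(x,a)$, which explodes on $\mathcal{B}$, into the uniform constant $\sqrt{\tfrac{2}{N_\wedge}\log(\cdots)}$ demanded by $\zeta$. My approach is to analyse the two outer brackets \emph{jointly} rather than individually: on $\mathcal{B}$ the two policies agree action-by-action, so the two occupancy-weighted integrands share identical local action probabilities and a careful telescoping using the $\widehat{M}$-Bellman identities reduces their $\mathcal{B}$-contributions to a residual that is absorbed into the middle bracket; on $\mathcal{B}^c$ one has $N_\mathcal{D}(x,a)\ge N_\wedge$, so $e_\delta(x,a)$ is uniformly bounded by the target square-root. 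Summing the remaining pointwise bounds against the occupancy measures $d^{\pi_b^*}_{M^*}$ and $d^{\pi_b}_{M^*}$ — each summing to $1/(1-\gamma)$ — produces the leading $\tfrac{4V_{\max}}{1-\gamma}$ factor in $\zeta$, and rearranging yields $\rho(\pi_b^*,M^*)\ge\rho(\pi_b,M^*)-\zeta$ on the high-probability event.
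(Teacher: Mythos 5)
First, a point of reference: the paper does not prove this theorem — it is recalled verbatim from \citet{Laroche2019} — so the only meaningful comparison is with the proof in that reference. Your skeleton matches it: Weissman plus Hoeffding with a union bound over the $|\mathcal{X}||\mathcal{A}|$ pairs and the two estimated quantities (which is indeed where $2|\mathcal{X}||\mathcal{A}|2^{|\mathcal{X}|}$ comes from), the three-term decomposition that keeps $\rho(\pi_b^*,\widehat{M})-\rho(\pi_b,\widehat{M})$ intact, and the exploitation of $\pi_b^*=\pi_b$ on the low-count set $\mathcal{B}$. The constant accounting ($R_{\max}+\gamma V_{\max}\le 2V_{\max}$ per bracket, occupancy mass $1/(1-\gamma)$, two brackets giving the factor $4$) is also consistent with the stated $\zeta$.

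The gap sits exactly where you flag ``the hard part,'' and the mechanism you sketch for it does not go through as stated. In the simulation-lemma expansions of the two outer brackets, the $\mathcal{B}$-contributions are weighted by \emph{different} occupancy measures, $d^{\pi_b^*}_{M^*}$ versus $d^{\pi_b}_{M^*}$, which differ at the state level even where the policies agree action-wise (because the policies differ upstream), and they multiply \emph{different} value functions, $V^{\pi_b^*}_{\widehat{M}}$ versus $V^{\pi_b}_{\widehat{M}}$. Equal conditional action probabilities on $\mathcal{B}$ therefore do not make these terms cancel, and on $\mathcal{B}$ each term is controlled only by the trivial $\lVert P^*(\cdot|x,a)-\widehat{P}(\cdot|x,a)\rVert_1\le 2$, not by the $O(1/\sqrt{N_\wedge})$ rate. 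Moreover, ``absorbing a residual into the middle bracket'' is not available: the middle bracket must remain exactly $\rho(\pi_b^*,\widehat{M})-\rho(\pi_b,\widehat{M})$ for the stated $\zeta$ to emerge. The original argument instead bounds the difference of differences $\bigl[\rho(\pi_b^*,M^*)-\rho(\pi_b,M^*)\bigr]-\bigl[\rho(\pi_b^*,\widehat{M})-\rho(\pi_b,\widehat{M})\bigr]$ as a single object, via a performance-difference/fixed-point argument in which it is the policy discrepancy $\pi_b^*-\pi_b$ — supported only on pairs with $N_{\mathcal{D}}(x,a)\ge N_\wedge$ — that multiplies the model error, so that $e_\delta(x,a)$ is only ever invoked where it is at most $\sqrt{\tfrac{2}{N_\wedge}\log\tfrac{2|\mathcal{X}||\mathcal{A}|2^{|\mathcal{X}|}}{\delta}}$. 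Until you supply that argument (or an equivalent one), the proof is incomplete at its central step.
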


Our work also considers the algorithm Soft-SPIBB \citep{Nadjahi2019}, that constrains the policy search such that the cumulative state-local error never exceeds $\epsilon$, with $\epsilon$ a fixed hyper-parameter. More formally, the policy constraint is expressed as follows:
\begin{align}
  \Pi_{\sim} = \left\{\pi \,\middle\vert\, \forall x,
            \sum_{a \in \mathcal{A}} e_\delta(x,a) \big\lvert\pi(a|x)-\pi_b(a|x)\big\rvert \leq \epsilon\right\}.
\end{align}

Under some assumptions, \citet{Nadjahi2019} demonstrate a looser safe policy improvement bound. Nevertheless, the policy search is less constrained and their empirical evaluation reveals that Soft-SPIBB safely finds better policies than SPIBB.

Both algorithms presented in this section assume the behavior policy $\pi_b$ is known and can be used during the computation of a new policy.
In the next section, we get to the main contribution of this paper, where we investigate how these algorithms can be applied when $\pi_b$ is not given.

\section{Baseline Estimates}
\label{sec:empiric}
In this section, we consider that the true baseline is unknown and implement a baseline estimate in order for the SPIBB and Soft-SPIBB algorithms to still be applicable.
Before we start our analysis, we present an auxiliary lemma.

Let $d_{M}^\pi(x,a)$ be the discounted sum of visits of state-action pair $(x,a) \in \mathcal{X} \times \mathcal{A}$ while following policy $\pi$ in MDP $M$ and $d_\mathcal{D}$ is the state-action discounted distribution in dataset $\mathcal{D}$.

\begin{lemma} \label{lem:dist}
Considering that the trajectories in $\mathcal{D}$ are i.i.d. sampled, the $\text{L}_1$ deviation of the empirical discounted sum of visits of state-action pairs is bounded.
We have the following concentration bound:
\begin{align}
    \mathbb{P}\left(\big\lVert d_{M^*}^{\pi_b}-d_\mathcal{D}\big\rVert_1 (1-\gamma) \geq \varepsilon\right)
    \leq \left(2^{|\mathcal{X}||\mathcal{A}|}-2\right)\exp\left(-\cfrac{N\varepsilon^2}{2}\right),
\end{align}
where $N$ is the number of trajectories in $\mathcal{D}$.
\end{lemma}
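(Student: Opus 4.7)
The plan is to reduce the claim to a Weissman--Devroye-style $L_1$ concentration inequality for empirical probability measures on $\mathcal{X}\times\mathcal{A}$, where the key observation is that each trajectory---not each individual transition---plays the role of one i.i.d.\ sample. The geometric $(1-\gamma)\gamma^t$ weighting is exactly what makes this reduction clean.

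Concretely, I would set $p := (1-\gamma)\,d_{M^*}^{\pi_b}$ and, for each trajectory $j \in \llbracket 1, N \rrbracket$, define the random sub-probability measure
\begin{align*}
p^{(j)}(x,a) := (1-\gamma)\sum_{t \geq 0} \gamma^t \mathds{1}\{x_t^{(j)}=x,\, a_t^{(j)}=a\}.
\end{align*}
Then $p^{(j)}$ has total mass at most $1$, $\mathbb{E}[p^{(j)}] = p$, and $\hat{p} := \frac{1}{N}\sum_{j=1}^{N} p^{(j)} = (1-\gamma)\,d_{\mathcal{D}}$ is the average of $N$ i.i.d.\ simplex-valued samples with mean $p$. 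For each fixed $A \subseteq \mathcal{X}\times\mathcal{A}$, the scalars $p^{(j)}(A) \in [0,1]$ are i.i.d.\ with mean $p(A)$, so Hoeffding's one-sided inequality delivers $\mathbb{P}(\hat{p}(A)-p(A) \geq \varepsilon/2) \leq \exp(-N\varepsilon^2/2)$. I would then invoke the standard identity $\|\hat{p}-p\|_1 = 2\sup_{A}(\hat{p}(A)-p(A))$, so that $\{\|\hat{p}-p\|_1 \geq \varepsilon\} \subseteq \bigcup_{A} \{\hat{p}(A)-p(A) \geq \varepsilon/2\}$, and a union bound over the $2^{|\mathcal{X}||\mathcal{A}|}-2$ non-trivial subsets closes the argument. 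Finally, rewriting $\|\hat{p}-p\|_1 = (1-\gamma)\|d_{\mathcal{D}}-d_{M^*}^{\pi_b}\|_1$ produces exactly the stated bound.

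The main obstacle, and the only conceptually non-routine step, is the very first one: resisting the temptation to treat transitions as the i.i.d.\ units (they are correlated within a trajectory) and instead bundling each entire trajectory into a single bounded simplex-valued sample. Once this reduction is in place, the $(1-\gamma)\gamma^t$ weights both normalize the per-trajectory summary into $[0,1]^{|\mathcal{X}||\mathcal{A}|}$ (which is what Hoeffding consumes) and explain why the $(1-\gamma)$ factor appears on the left-hand side of the bound, with the rate being in $N$ rather than in the total number of transitions. After that, the Weissman--Devroye union-bound step over subsets of $\mathcal{X}\times\mathcal{A}$ is entirely standard.
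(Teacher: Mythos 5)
Your proposal is correct and follows essentially the same route as the paper: the paper applies McDiarmid's bounded-differences inequality to $f_E(T) = (1-\gamma)\sum_{i}\sum_{t\geq 0}\gamma^t\mathds{1}(T_i^t\in E)$ with per-trajectory bounded difference $1$, which for this sum of independent bounded terms is exactly your Hoeffding step on the i.i.d.\ scalars $p^{(j)}(A)\in[0,1]$. The identification of whole trajectories (rather than transitions) as the i.i.d.\ units, the identity $\lVert \hat p - p\rVert_1 = 2\sup_{A}(\hat p(A)-p(A))$, and the union bound over the $2^{|\mathcal{X}||\mathcal{A}|}-2$ nontrivial subsets all match the paper's argument.
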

\begin{proof}
Let $\mathcal{T} = (\mathcal{X} \times \mathcal{A})^\mathbb{N}$ denote the set of trajectories and $T = (T_1, \dots, T_N)$ be a set of N $\mathcal{T}$-valued random variables. For a given $E \subset \mathcal{X} \times \mathcal{A}$, we define the function $f_E$ on $\mathcal{T}$ as:
\begin{align*}
    f_E(T) = f_E(T_1, \dots, T_N) \coloneqq (1 - \gamma) \sum_{i = 1}^N \sum_{t \geq 0} \gamma^t \mathds{1}(T_i^t \in E),
\end{align*}
where $T_i^t$ is the state-action pair on trajectory $i$ at time $t$.
In particular, we have that
\begin{align}\label{eq:ocu_hat}
    f_E(\mathcal{D}) = N (1-\gamma) d_\mathcal{D}(E)  \text{ and}
\end{align}
\begin{align}\label{eq:ocu_exp}
    \mathds{E}[f_E(T)] = N (1 - \gamma) d_{M^*}^{\pi_b}(E) ,
\end{align}
where $d_\mathcal{D}(E)$ and $d_{M^*}^{\pi_b}(E)$ denote the mass of set $E$ under $d_\mathcal{D}$ and $d_{M^*}^{\pi_b}$ respectively.

For two sets $T$ and $T'$ differing only on one trajectory, say the $k$-th, we have:
\begin{align*}
    |f_E(T) - f_E(T')| = |(1 - \gamma) \sum_{t \geq 0} \gamma^t \left(\mathds{1}(T_k^t \in E) - \mathds{1}({T'}_k^t \in E)\right)| \leq 1.
\end{align*}
This allows us to apply the independent bounded difference inequality by \citet[Theorem 3.1]{mcdiarmid1998concentration}, which gives us:
\begin{align}\label{eq:bounded_diff}
    \mathbb{P}\left( f_E(T) - \mathds{E}[f_E(T)] \geq  \bar{\varepsilon}\right)
    \leq \exp\left(-2\cfrac{\bar{\varepsilon}^2}{N}\right).
\end{align}
We know that
\[
    \big\lVert d_{M^*}^{\pi_b}-d_\mathcal{D}\big\rVert_1 (1-\gamma) = \displaystyle{\max_{E \subset \mathcal{X} \times \mathcal{A}}} 2 (1 - \gamma) (d_\mathcal{D}(E) - d_{M^*}^{\pi_b}(E)).
\]
This guarantees from a coarse union bound and equations \ref{eq:ocu_hat}, \ref{eq:ocu_exp} and \ref{eq:bounded_diff} that:
\begin{align*}
    \mathbb{P} &\left(\big\lVert d_{M^*}^{\pi_b}-d_\mathcal{D}\big\rVert_1 (1-\gamma) \geq \varepsilon\right) \\
        & \leq \sum_{E \subset \mathcal{X} \times \mathcal{A}} \mathbb{P} \left( (1 - \gamma) (d_\mathcal{D}(E) - d_{M^*}^{\pi_b}(E)) \geq \frac{\varepsilon}{2} \right) \\
        &= \sum_{E \subset \mathcal{X} \times \mathcal{A}} \mathbb{P} \left( (1 - \gamma) \left(\frac{f_E(\mathcal{D})}{N(1-\gamma)} - \frac{\mathbb{E}[f_E(\mathcal{D})]}{N(1-\gamma)}\right) \geq \frac{\varepsilon}{2} \right) \\
        &\leq \sum_{E \subset \mathcal{X} \times \mathcal{A}} \exp\left(-2\cfrac{{(\frac{N\varepsilon}{2})}^2}{N}\right)    \\
        & \leq \left(2^{|\mathcal{X}||\mathcal{A}|}-2\right) \exp\left(-\frac{N\varepsilon^2}{2}\right),
\end{align*}
where in the sum over subsets, we ignored the empty and full sets for which the probability is trivially 0.
\end{proof}

\subsection{Algorithm and analysis}
We construct the Maximum Likelihood Estimate of the baseline $\widehat{\pi}_b$ (MLE baseline) as follows:
\begin{align}
\widehat{\pi}_b(a|x) = \begin{cases}
        \frac{N_\mathcal{D}(x,a)}{N_\mathcal{D}(x)} & \text{if } N_\mathcal{D}(x)>0,\\
        \frac{1}{|\mathcal{A}|}      & \text{otherwise},
    \end{cases}
    \label{eq:pi_hat}
\end{align}
where $N_\mathcal{D}(x)$ is the number of transitions starting from state $x$ in dataset $\mathcal{D}$. Using this MLE policy, we may prove approximate safe policy improvement:

\begin{theorem}[Safe policy improvement with a baseline estimate]
    Given an algorithm $\alpha$ relying on the baseline $\pi_b$ to train a $\zeta$-approximate safe policy improvement $\pi^*_b$ over $\pi_b$ with high probability $1-\delta$. Then, $\alpha$ with an MLE baseline $\widehat{\pi}_b$ allows to train a $\widehat{\zeta}$-approximate safe policy improvement $\widehat{\pi}^*_b$ over $\pi_b$ with high probability $1-\widehat{\delta}$:
    \begin{align}
        \widehat{\delta} &= \delta + 2\delta', \\
        \widehat{\zeta} &= \zeta + \cfrac{2 R_{\max}}{1-\gamma}\sqrt{\cfrac{3|\mathcal{X}||\mathcal{A}| + 4\log\frac{1}{\delta'}}{2N}},
            \end{align}
    where $N$ is the number of trajectories in the dataset $\mathcal{D}$ and $1-\delta'$ controls the uncertainty stemming from the baseline estimation.
    \label{thm:empiric}
\end{theorem}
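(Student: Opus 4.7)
The plan is to use the hypothesized safety guarantee of $\alpha$ as a black box and then separately control the performance gap incurred by substituting $\widehat{\pi}_b$ for $\pi_b$ as the reference baseline. Because the $\zeta$-SPI property of $\alpha$ applies to whatever policy is supplied as the baseline, feeding $\widehat{\pi}_b$ into $\alpha$ produces a policy $\widehat{\pi}_b^*$ satisfying
\begin{align*}
\rho(\widehat{\pi}_b^*, M^*) \geq \rho(\widehat{\pi}_b, M^*) - \zeta
\end{align*}
with probability at least $1-\delta$. It therefore suffices to show, on an event of probability at least $1-2\delta'$, that
\begin{align*}
\rho(\pi_b, M^*) - \rho(\widehat{\pi}_b, M^*) \leq \frac{2 R_{\max}}{1-\gamma}\sqrt{\frac{3|\mathcal{X}||\mathcal{A}|+4\log(1/\delta')}{2N}},
\end{align*}
after which the announced $\widehat{\zeta}$ and $\widehat{\delta}=\delta+2\delta'$ follow from a union bound over the algorithm-success event and the two concentration events.

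To establish the gap bound, I would expand it as $\sum_{x,a}\bigl[d_{M^*}^{\pi_b}(x,a)-d_{M^*}^{\widehat{\pi}_b}(x,a)\bigr]R^*(x,a)$, bound it in absolute value by $R_{\max}\lVert d_{M^*}^{\pi_b}-d_{M^*}^{\widehat{\pi}_b}\rVert_1$, and insert the empirical discounted occupancy $d_{\mathcal{D}}$ as a pivot via the triangle inequality,
\begin{align*}
\lVert d_{M^*}^{\pi_b}-d_{M^*}^{\widehat{\pi}_b}\rVert_1 \leq \lVert d_{M^*}^{\pi_b}-d_{\mathcal{D}}\rVert_1 + \lVert d_{\mathcal{D}}-d_{M^*}^{\widehat{\pi}_b}\rVert_1.
\end{align*}
Each piece is then fed into Lemma~\ref{lem:dist}. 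Choosing $\varepsilon$ so that $(2^{|\mathcal{X}||\mathcal{A}|}-2)\exp(-N\varepsilon^2/2)\leq\delta'$ and using the slack $\log 2\leq 3/4$ yields $\varepsilon\leq\sqrt{(3|\mathcal{X}||\mathcal{A}|+4\log(1/\delta'))/(2N)}$, so each $\ell_1$ term is at most $\varepsilon/(1-\gamma)$ with probability at least $1-\delta'$. Summing them, multiplying by $R_{\max}$, and composing with the algorithm's $1-\delta$ guarantee then delivers the stated $\widehat{\zeta}$.

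The main obstacle is the second concentration, on $\lVert d_{\mathcal{D}}-d_{M^*}^{\widehat{\pi}_b}\rVert_1$. Lemma~\ref{lem:dist} as stated assumes i.i.d.\ trajectories drawn under a \emph{fixed} behavior policy, whereas $\widehat{\pi}_b$ is itself a random function of $\mathcal{D}$, so the bounded-differences argument does not apply verbatim. I would either re-derive the McDiarmid step with $\widehat{\pi}_b$ playing the role of the generating policy---exploiting that its defining counts coincide in expectation with those collected under $\pi_b$---or bypass the $d_{\mathcal{D}}$ pivot altogether by using a direct performance-difference bound in terms of $\mathbb{E}_{x\sim d_{M^*}^{\pi_b}}\lVert\pi_b(\cdot|x)-\widehat{\pi}_b(\cdot|x)\rVert_1$, which can be controlled by a single cleaner concentration on the per-state multinomial empirical conditionals.
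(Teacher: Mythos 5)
Your proposal follows essentially the same route as the paper's proof: the same two-term decomposition into the algorithm's SPI guarantee applied with $\widehat{\pi}_b$ treated as the behavioural policy plus the baseline-estimation gap, the same H\"older bound by $R_{\max}\big\lVert d_{M^*}^{\pi_b}-d_{M^*}^{\widehat{\pi}_b}\big\rVert_1$, the same triangle inequality through the pivot $d_\mathcal{D}$, and the same calibration of $\varepsilon$ in Lemma~\ref{lem:dist} (via $4\log 2\leq 3$). The obstacle you flag for the term $\big\lVert d_\mathcal{D}-d_{M^*}^{\widehat{\pi}_b}\big\rVert_1$ is genuine, and the paper itself resolves it only informally, by arguing that the correlation between $\widehat{\pi}_b$ and $d_\mathcal{D}$ induced through $\mathcal{D}$ is \emph{positive}, so the divergence is no larger than for an independently drawn dataset and the same concentration bound may be reused.
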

\begin{proof}
We are ultimately interested in the performance improvement of $\widehat{\pi}^*_b$ with respect to the true baseline $\pi_b$ in the true environment $M^*$. To do so, we decompose the difference into two parts:
\begin{align}
\begin{split}
     \rho(\widehat{\pi}^*_b, M^*)-\rho(\pi_b, M^*) &= \underbrace{\rho(\widehat{\pi}^*_b, M^*) - \rho(\widehat{\pi}_b, M^*)}_\text{$\alpha$-SPI guarantee} \\
     & + \underbrace{\rho(\widehat{\pi}_b, M^*) -\rho(\pi_b, M^*)}_\text{baseline estimate approximation}. \label{eq:newdev}
\end{split}
\end{align}

Regarding the first term, note that, while $\widehat{\pi}_b$ is not the true baseline, it is the MLE baseline, meaning in particular that it was more likely to generate the dataset $\mathcal{D}$ than the true one. Hence, we may consider it as a potential behavioural policy, and apply the safe policy improvement guarantee provided by algorithm $\alpha$ to bound the difference.

Regarding the second term, we need to use the distributional formulation of the performance of any policy $\pi$:
\begin{align}
     \rho(\pi, M) &= \sum_{x\in\mathcal{X}} \sum_{a\in\mathcal{A}} d_{M}^\pi(x,a)\mathbb{E}[R(x,a)].
\end{align}
Then, we may rewrite the second term in Equation~\ref{eq:newdev} and upper bound it using H\"older's inequality as follows:
\begin{align}
\begin{split}
     \sum_{x\in\mathcal{X}} \sum_{a\in\mathcal{A}} & \left(d_{M^*}^{\widehat{\pi}_b}(x,a)-d_{M^*}^{\pi_b}(x,a)\right) \mathbb{E}[R^*(x,a)] \\
     &\leq \big\lVert d_{M^*}^{\widehat{\pi}_b}-d_{M^*}^{\pi_b}\big\rVert_1 R_{\max}.
\end{split}
\end{align}

Next, we decompose the state-action discounted visits divergence as follows:
\begin{align}
\begin{split}
     \big\lVert d_{M^*}^{\widehat{\pi}_b}-d_{M^*}^{\pi_b}\big\rVert_1 \leq
     \underbrace{\big\lVert d_{M^*}^{\pi_b}-d_\mathcal{D}\big\rVert_1}_\text{Lemma~\ref{lem:dist}}
     + \underbrace{\big\lVert d_{M^*}^{\widehat{\pi}_b}-d_\mathcal{D}\big\rVert_1.}_\text{positive correlation} \label{eq:correlation}
 \end{split}
\end{align}
For the first term, we can use the concentration inequality from Lemma~\ref{lem:dist}\footnote{We need to rescale with $(1-\gamma)$ the state-action discounted visits to make it sum to 1 since the original bound applies to probability distributions.}.
With a little calculus and by setting the right value to~$\varepsilon$, we obtain with high probability $1-\delta'$:
\begin{align*}
     \big\lVert d_{M^*}^{\pi_b}-d_\mathcal{D}\big\rVert_1 \leq \cfrac{1}{1-\gamma}\sqrt{\cfrac{3|\mathcal{X}||\mathcal{A}| + 4\log\frac{1}{\delta'}}{2N}}.
    \end{align*}

Regarding the second term of Equation~\ref{eq:correlation}, we may observe that there is a correlation between $\widehat{\pi}_b$ and $d_\mathcal{D}$ through $\mathcal{D}$, but it is a positive correlation, meaning that the divergence between the distributions is smaller than the one with an independently drawn dataset of the same size. As a consequence, we are also able to upper bound it by assuming independence, and using the same development as for the first term. This finally gives us from Equation~\ref{eq:correlation} and with high probability $1-2\delta'$:
\begin{align}
     \big\lVert d_{M^*}^{\widehat{\pi}_b}-d_{M^*}^{\pi_b}\big\rVert_1 &\leq \cfrac{2}{1-\gamma}\sqrt{\cfrac{3|\mathcal{X}||\mathcal{A}| + 4\log\frac{1}{\delta'}}{2N}},
    \end{align}
which allows us to conclude the proof using union bounds.
\end{proof}

\subsection{Theorem~\ref{thm:empiric} discussion}
SPIBB and Soft-SPIBB safe policy improvement guarantees exhibit a trade-off (controlled with their respective hyper-parameters $\frac{1}{\sqrt{N_\wedge}}$ and $\epsilon$) between upper bounding the true policy improvement error (first term in Theorem~\ref{th:safepolicyimprovement-pi}) and allowing maximal policy improvement in the MLE MDP (next terms). When the hyper-parameters are set to 0, the true policy improvement error is null, because, trivially, no policy improvement is allowed: the algorithm is forced to reproduce the baseline. When the hyper-parameters grow, larger improvements are permitted, but the error upper bound term also grows. When the hyper-parameters tend to $+\infty$, the algorithms are not constrained anymore and find the optimal policy in the MLE MDP. In that case, the error is no longer upper bounded, resulting in poor safety performance.

When using the MLE baseline instead of the true baseline, Theorem~\ref{thm:empiric} introduces another error upper bound term accounting for the accurateness of the baseline estimate that cannot be reduced by hyper-parameter settings. That fact is entirely expected, as otherwise we could consider an empty dataset, pretend it was generated with an optimal policy and expect a safe policy improvement over it. Another interesting point is that the bound depends on the number of trajectories, not the number of state-action visits, nor the total number of samples. Indeed, even with a huge number of samples, if there were collected only from a few trajectories, the variance may still be high, since future states visited on the trajectory depend on the previous transitions.

\begin{figure*}[t]
\centering
\hfill
\includegraphics[width=0.47\textwidth]{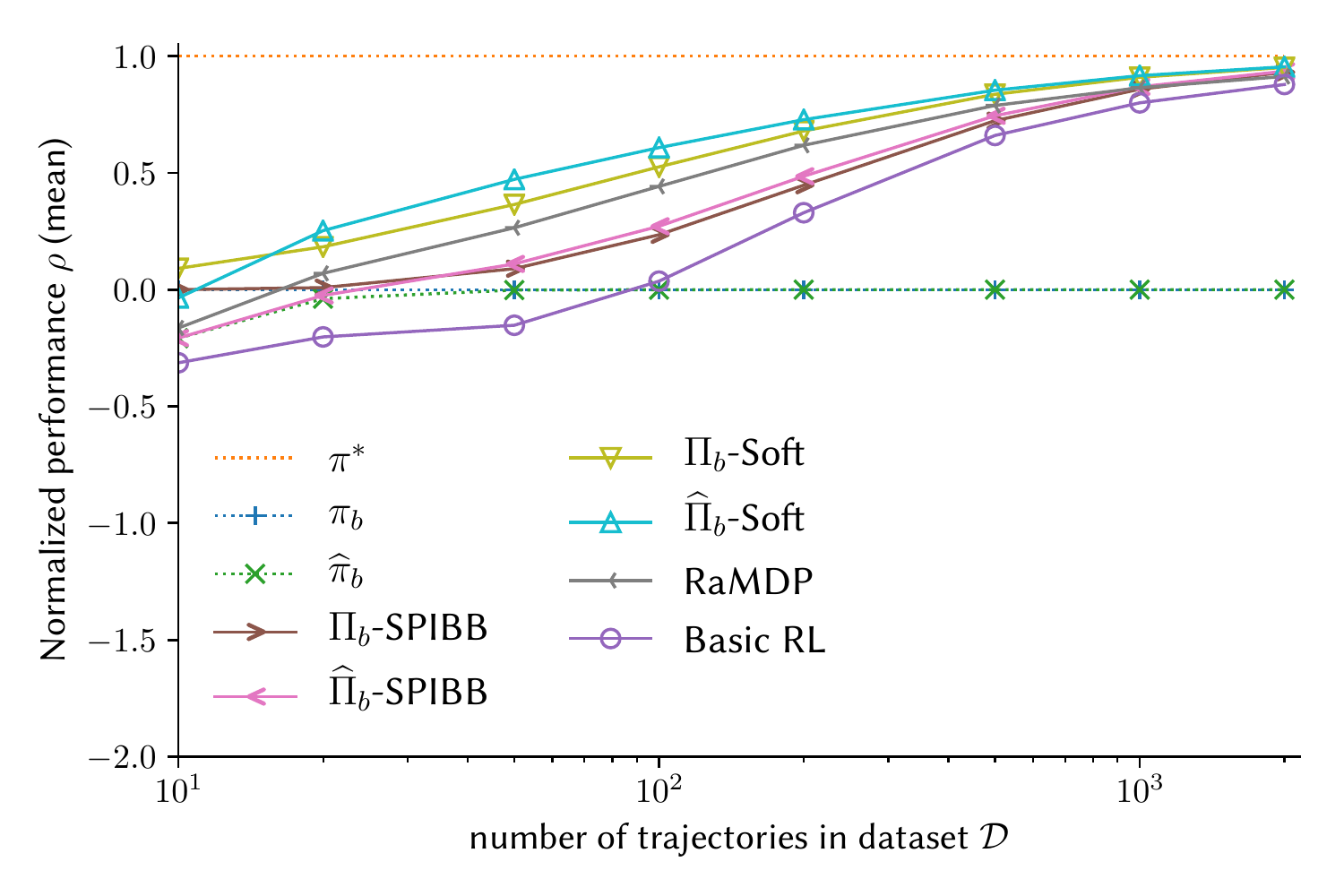}\hfill
\hfill
\includegraphics[width=0.47\textwidth]{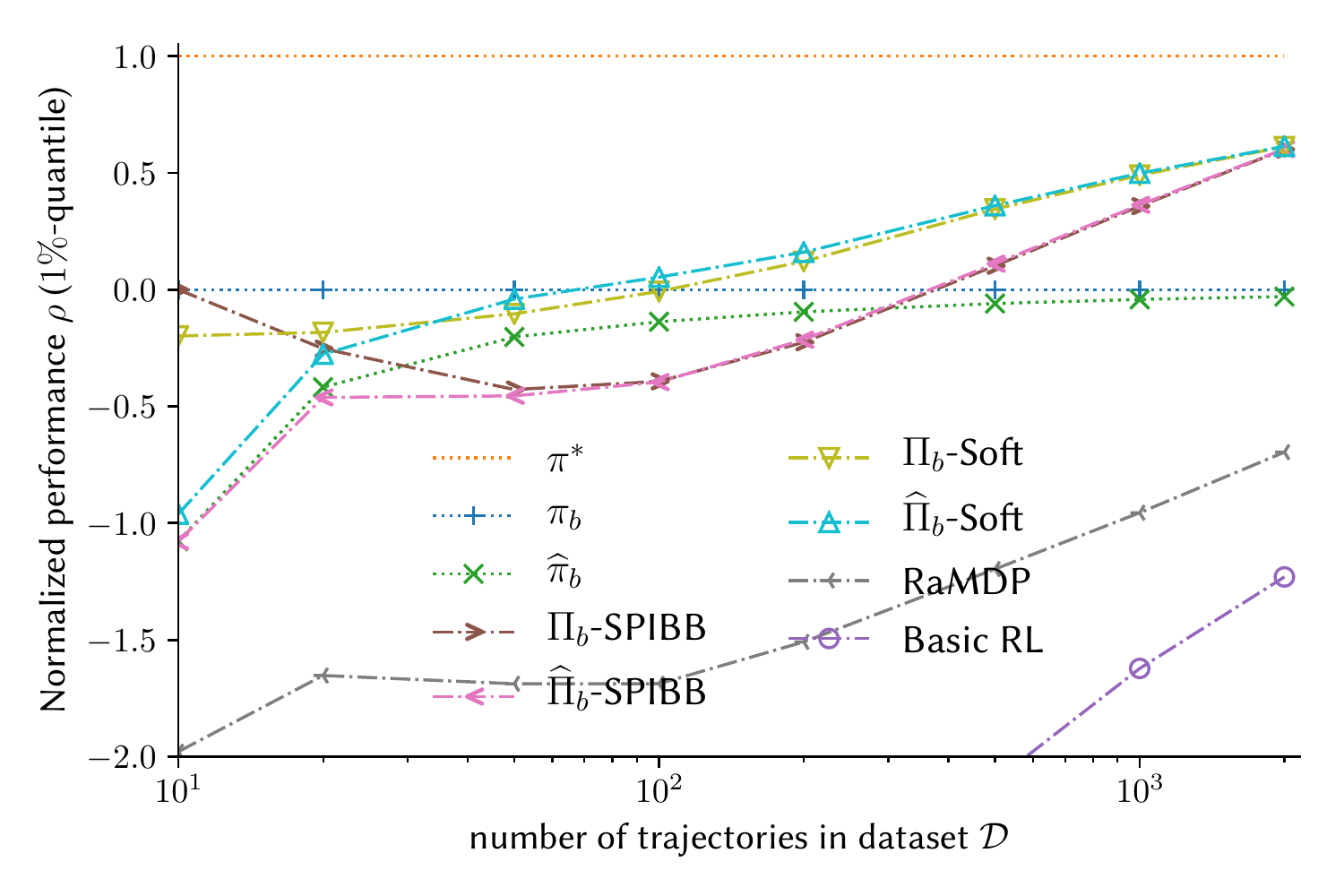}
\hfill
\caption{Finite MDPs with \boldmath$\eta = 0.9$, $N_\wedge=7$ and $\epsilon=0.5$. On the left, the mean curves, on the right, the 1\%-quantile curves.}
\label{fig:fin_1}
\end{figure*}

Regarding the MDP parameters dependency, the upper bound grows as the square root of the state set size, as for standard SPIBB, but also grows as the square root of the action set size contrarily to SPIBB that has a logarithmic dependency, which may cause issues in some RL problems. The direct horizon dependency is the same (linear). But one could argue that it is actually lower. The maximal value $V_{max}$ in the SPIBB bounds can reach $\frac{R_{max}}{1-\gamma}$, making the dependency in $H$ quadratic, while the $N$ in our denominator may be regarded as a hidden horizon (since $N\approx \frac{|\mathcal{D}|}{H}$), making the total dependency $\approx H^{3/2}$. In both cases, those are better than the Soft-SPIBB cubic dependency.

One may consider other baseline estimates than the MLE, using Bayesian priors for instance, and infer new bounds. This should work as long as the baseline estimate remains a policy that could have generated the dataset.

\section{Empirical Analysis}
Our empirical analysis reproduces the most challenging experiments found in \citet{Laroche2019} and \citet{Nadjahi2019}.
We split it in two parts, the first considers random MDPs with finite state spaces and the second MDPs with continuous state spaces.

\subsection{Random finite MDPs}
\label{sec:finite}
\subsubsection{Setup:}
The objective of this experiment is to empirically analyse the consistency between the theoretical findings and the practice.
The experiment is run on finite MDPs that are randomly generated, with randomly generated baseline policies from which trajectories are obtained.
We recall the setting below.

The true environment is a randomly generated MDP with 50 states, 4 actions, and a transition connectivity of 4: a given state-action pair may transit to 4 different states at most.
The reward function is 0 everywhere, except for transitions entering the goal state, in which case the trajectory terminates with a reward of 1.
The goal state is the hardest state to reach from the initial one.

The baselines are also randomly generated with a predefined level of performance specified by a ratio $\eta$ between the optimal policy $\pi^*$ performance and the uniform policy $\tilde{\pi}$ performance:
$
    \rho(\pi_b,M) = \eta \rho(\pi^*,M) + (1-\eta) \rho(\tilde{\pi},M).
$
For more details on the process, we refer the interested reader to the original papers.
Two values for $\eta$ were considered: the experiments with $\eta=0.9$ are reported here.
The experiments with $\eta=0.1$ had similar results and are omitted for lack of space.
We also study the influence of the dataset size $|\mathcal{D}| \in [10, 20, 50, 100, 200, 500, 1000, 2000]$.

\subsubsection{Competing algorithms:}
Our plots display nine curves:
\begin{itemize}
    \item $\pi^*$: the optimal policy,
    \item $\pi_b$: the true baseline,
    \item $\widehat{\pi}_b$: the MLE baseline,
    \item $\Pi_b$/$\widehat{\Pi}_b$-SPIBB: SPIBB with their respective baselines,
    \item $\Pi_b$/$\widehat{\Pi}_b$-Soft: Soft-SPIBB with their respective baselines,
    \item RaMDP: Reward-adjusted MDP,
    \item and Basic RL: dynamic programming on the MLE MDP.
\end{itemize}

All the algorithms are compared using their optimal hyper-parameter according to previous work.
Our hyper-parameter search with the MLE baselines did not show significant differences and we opted to report results with the same hyper-parameter values.
Soft-SPIBB algorithms are the ones coined as Approx. Soft SPIBB by \citet{Nadjahi2019}.

\subsubsection{Performance indicators:}
Given the random nature of the MDP and baseline generations, we normalize the performance to allow inter-experiment comparison:
\begin{align}
    \rho=\cfrac{\rho(\pi,M^*) - \rho(\pi_b,M^*)}{\rho(\pi^*,M^*)-\rho(\pi_b,M^*)}.
\end{align}
Thus, the optimal policy always has a normalized performance of 1, and the true baseline a normalized performance of 0.
A positive normalized performance means a policy improvement, and a negative normalized performance an infringement of the policy improvement objective.
Figures either report the average normalized performance of the algorithms or its $1\%$-quantile\footnote{Note the difference with previously reported results in SPIBB papers, which focused on the conditional value at risk indicator.}.
Each setting is processed on 250k seeds, to ensure that every performance gap visible to the naked eye is significant.

\subsubsection{Empirical results:}
Figure~\ref{fig:fin_1} shows the results with $\eta=0.9$, \textit{i.e.} the hard setting where the behavior baseline is almost optimal, and therefore difficult to improve.

\textit{Performance of the MLE baseline.}
First, we notice that the mean performance of the MLE baseline $\widehat{\pi}_{b}$ is slightly lower than the true baseline policy $\pi_b$ for small datasets.
As $|\mathcal{D}|$ increases, the performance of  $\widehat{\pi}_{b}$ quickly increases to reach the same level.
The $1\%$-quantile is significantly lower when the number of trajectories is reduced.

\textit{Soft-SPIBB with true and estimated baselines.}
Comparing the results of $\Pi_b$-Soft and $\widehat{\Pi}_{b}$-Soft curves, it is surprising that the policy computed using an estimated policy as a baseline yields better results than the one computed with the true policy.
Notice that the estimated baseline $\widehat{\pi}_{b}$ has a higher variance than the true baseline~$\pi_{b}$.
If we consider the impact of this variance in a given state, it means that sometimes the best (resp. worst) action will be taken more often (resp. less).
When it is the case, the trained policy will be better than what could have been done with the true baseline.
Sometimes, the opposite will happen, but in this case, the algorithm will try to avoid reaching this state and choose an alternative path.
This means that in expectation, this does not average out and the variance in the baseline estimation might be beneficial.

\textit{SPIBB with true and estimated baselines.}
Analysing the performance of the $\widehat{\Pi}_{b}$-SPIBB algorithm, we notice that it also slightly improves over $\Pi_b$-SPIBB on the mean normalized performance.
As far as safety is concerned, we see that the $1\%$-quantile of policies computed with $\widehat{\Pi}_{b}$-SPIBB falls close to the $1\%$-quantile of the estimated baseline $\widehat{\pi}_{b}$ for small datasets and close to the $1\%$-quantile of the policies $\Pi_b$-SPIBB for datasets with around 100 trajectories. It is expected as $\widehat{\Pi}_{b}$-SPIBB tends to reproduce the baseline for very small datasets, and improves over it for larger ones. That statement is also true of $\widehat{\Pi}_{b}$-Soft.

\textit{RaMDP and Basic RL.}
Finally, it is interesting to observe that although RaMDP and Basic RL can compute policies with rather high mean performance, these algorithms often return policies performing much worse than the MLE policy $\widehat{\pi}_{b}$ (as seen in their $1\%$-quantile).

\begin{wrapfigure}{r}{0.35\columnwidth}
\centering
\includegraphics[width=0.35\columnwidth]{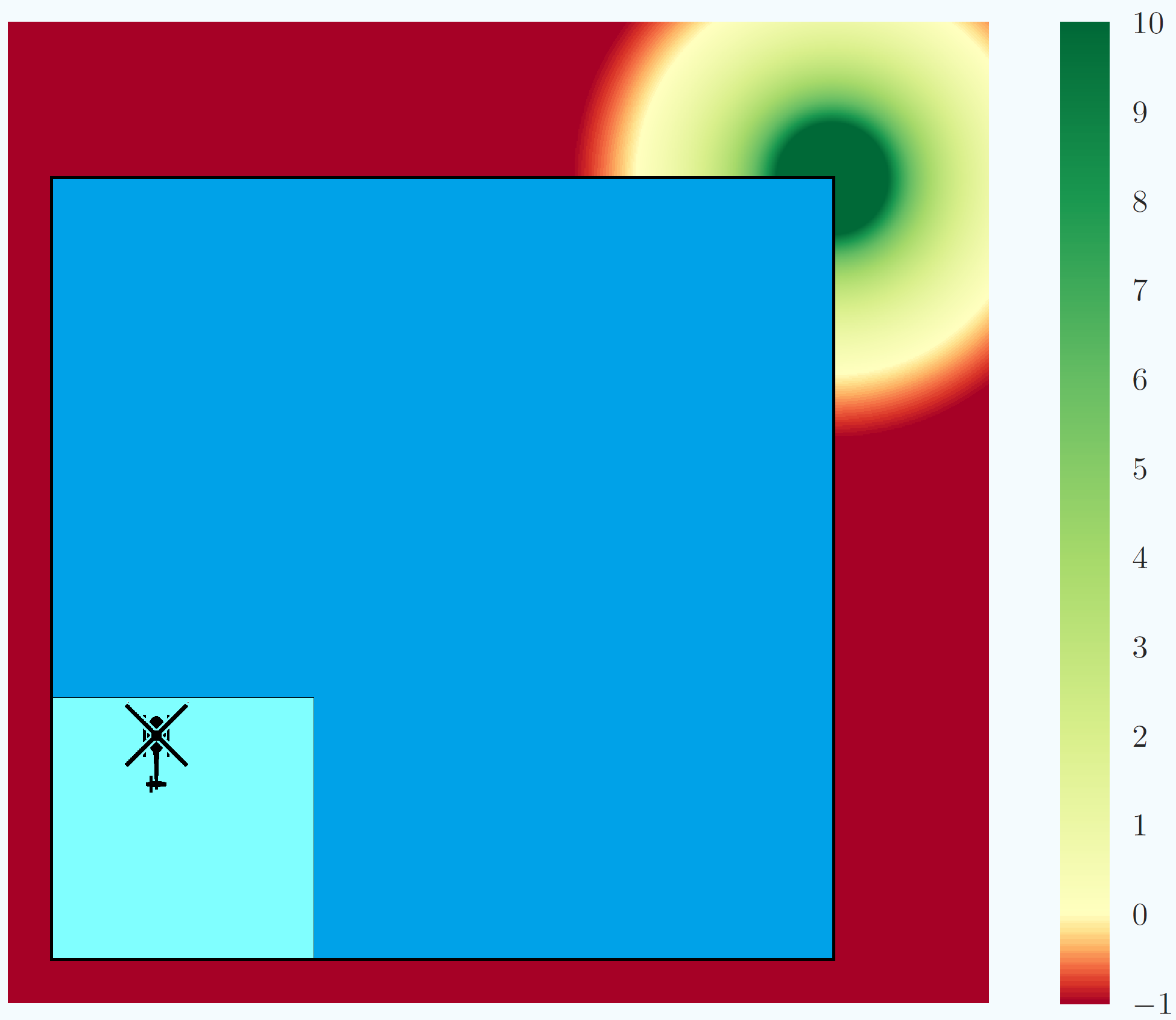}
\caption{Helicopter.}
\label{fig:helico}
\end{wrapfigure}
\subsection{Continuous MDPs}
\label{sec:helico}
\subsubsection{Helicopter domain:}
For MDPs with continuous state space, we focus on the helicopter environment \citep[Figure~\ref{fig:helico}]{Laroche2019}.
In this stochastic domain, the state is defined by the position and velocity of the helicopter.
The agent has a discrete set of 9 actions to control the thrust applied in each dimension.
The helicopter begins in a random position of the bottom-left corner with a random initial velocity.
The episode ends if the helicopter's speed exceeds some threshold, giving a reward of -1, or if it leaves the valid region, in which case the agent gets a reward between -1 and 10 depending on how close it is to the top-right corner. Using a fixed behavior policy $\pi_b$ we generate $1,000$ datasets for each algorithm. We report results for two dataset sizes: $3,000$ and $10,000$ transitions.

\begin{figure*}[t!]
\centering
\includegraphics[width=0.99\textwidth]{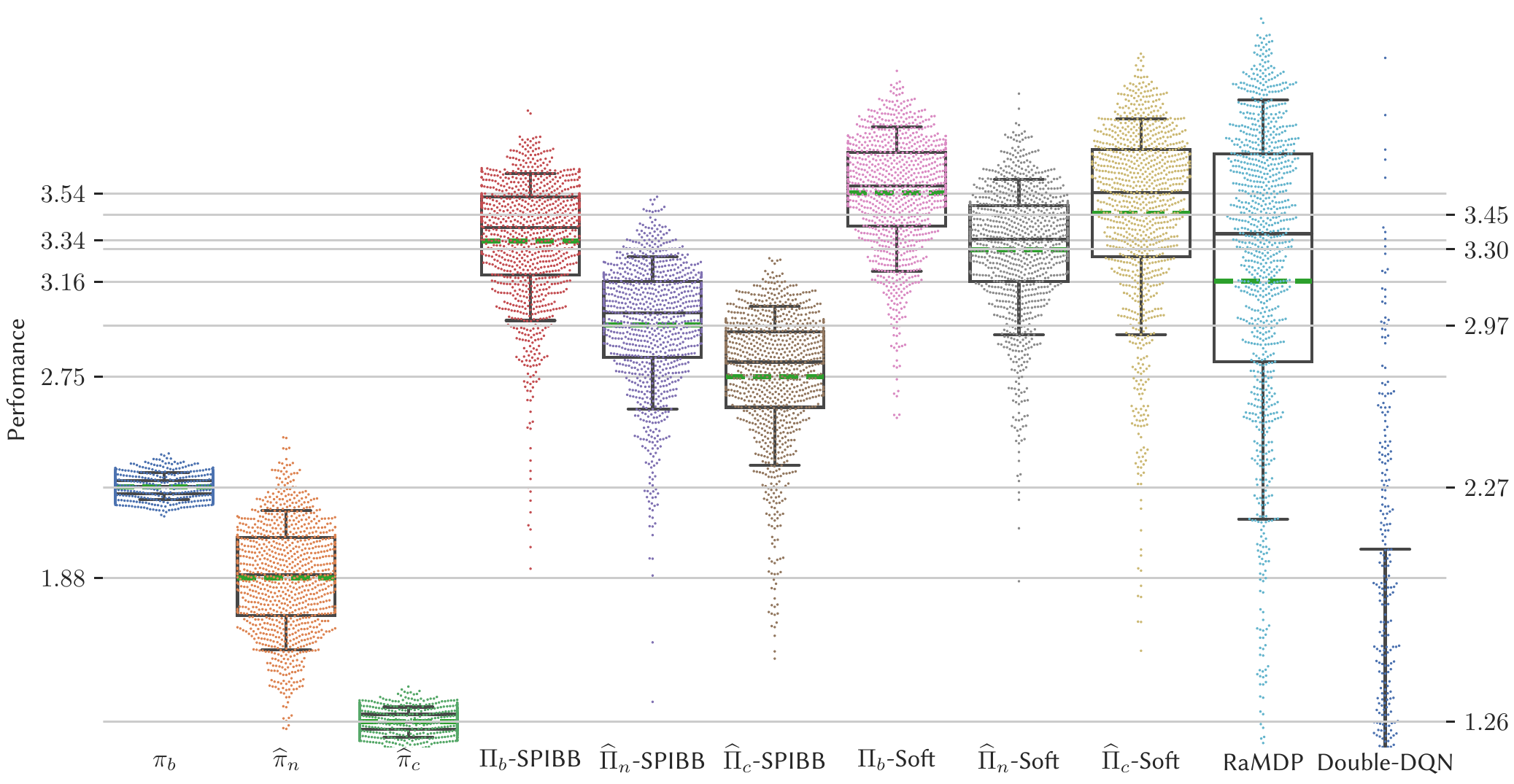}
\caption{\boldmath$|\mathcal{D}| = 10,000$. The green dashed line shows the average and the caps show the 10\% and 90\% percentile. Each dot on the swarm plots displays the evaluation of a seed.}
\label{fig:cont_large}
\end{figure*}

\begin{figure*}[t!]
\centering
\includegraphics[width=0.99\textwidth]{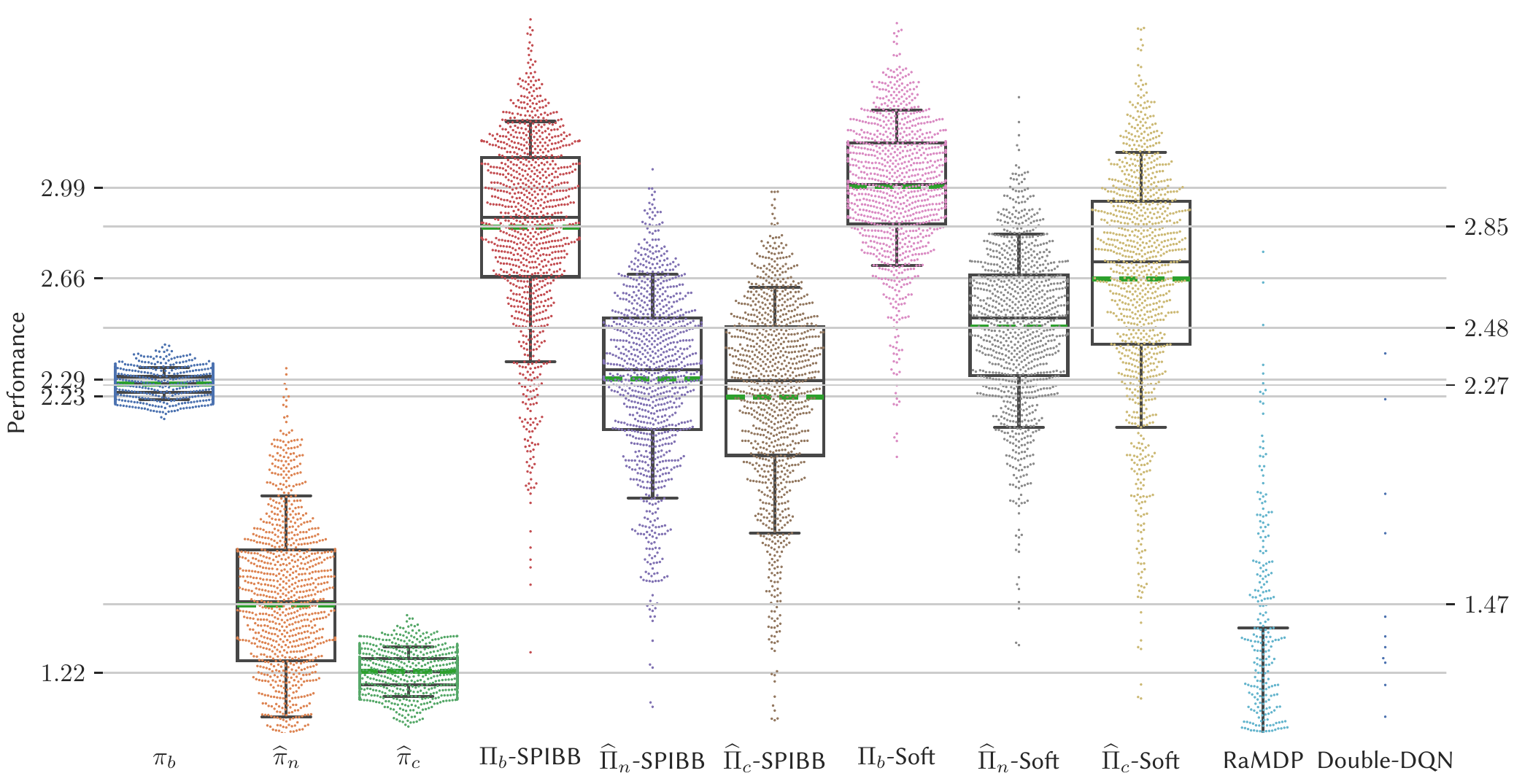}
\caption{\boldmath$|\mathcal{D}| = 3,000$. The green dashed line shows the average and the caps show the 10\% and 90\% percentile. Each dot on the swarm plots displays the evaluation of a seed.}
\label{fig:cont_small}
\end{figure*}

\subsubsection{Behavioural cloning:}
In infinite MDPs, there is no MLE baseline definition. We have to lean on behavioural cloning techniques. We compare here two straightforward ones in addition to the true behavior policy $\pi_b$: a baseline estimate $\widehat{\pi}_c$ based on the same pseudo-counts used by the algorithms, and a neural-based baseline estimate $\widehat{\pi}_n$ that uses a standard probabilistic classifier.

The \textit{count-based policy} follows a principle similar to the MLE policy.
It uses a pseudo-count for state-action pairs $\tilde{N}(x, a)$ defined according to the sum of the euclidean distance $\lVert x-x'\rVert_2$ from the state $x$ and all states of transitions in the dataset where the action~$a$ was executed \cite[Section 3.4]{Laroche2019}:
\begin{align}
    \tilde{N}_{\mathcal{D}}(x,a) = \sum_{\langle x_j,a_j=a,r_j,x'_j \rangle\in\mathcal{D}} \max\left\{0, 1- \frac{\lVert x-x_j\rVert_2}{d_0}\right\},
\end{align}
where $d_0$ is a hyper-parameter to impose a minimum similarity before increasing the counter of a certain state.
We also compute the state pseudo-count using this principle: $\tilde{N}_{\mathcal{D}}(x) = \sum_{a\in \mathcal{A}}\tilde{N}_{\mathcal{D}}(x,a)$.
This way, we can define the count-based baseline estimate replacing the count in Equation~\ref{eq:pi_hat} by its pseudo-count counterpart:
\begin{align}
\widehat{\pi}_c(a|x) = \begin{cases}
        \frac{\tilde{N}_\mathcal{D}(x,a)}{\tilde{N}_\mathcal{D}(x)} & \text{if } \tilde{N}_\mathcal{D}(x)>0,\\
        \frac{1}{|\mathcal{A}|}      & \text{otherwise}.
    \end{cases}
    \label{eq:pseudo_count_estimate}
\end{align}

The \textit{neural-based policy}  $\widehat{\pi}_n(a|x)$ is estimated using a supervised learning approach.
We train a probabilistic classifier using a neural network to minimize the negative log-likelihood with respect to the actions in the dataset.

We use the same architecture as the one used to train the Double-DQN models, which is shared among all the algorithms in the helicopter domain experiments: a fully connected neural network with 3 hidden layers of 32, 128 and 28 neurons respectively, and 9 outputs corresponding to the 9 actions.

To avoid overfitting, we split the dataset in two parts: 80\% for training and 20\% for validation. During training, we evaluate the classifier on the validation dataset at the end of every epoch and keep the network with the smallest validation loss.

\begin{table*}[t!]
    \def\arraystretch{1.2}
    \centering
    \begin{tabular}{|cc|cccc|cccc|}
    \hline
            &  &  \multicolumn{4}{c|}{$|\mathcal{D}| = 3,000$} & \multicolumn{4}{c|}{$|\mathcal{D}| = 10,000$}  \\
           Baseline & Algorithm & $\mathbb{P}\left(\rho(\pi) > \rho(\pi_b)\right)$ & avg perf & 10\%-qtl & 1\%-qtl & $\mathbb{P}\left(\rho(\pi) > \rho(\pi_b)\right)$ & avg perf & 10\%-qtl & 1\%-qtl \\
           \hline
                 $\pi_b$   &            &       0.499    & 2.27 &  2.22 &  2.18    & 0.499 &  2.27 &  2.22 &  2.18 \\
       $\widehat{\pi}_{n}$ & baseline   &       0.002    & 1.47 &  1.06 &  0.75    & 0.032 &  1.88 &  1.57 &  1.34 \\
       $\widehat{\pi}_{c}$ &            &       0.000    &  1.22 &  1.13 &  1.05    & 0.000 &  1.26 &  1.19 &  1.14  \\
          \hline
             $\pi_b$       &            &       0.928    &  2.85 &  2.36 &  1.90    & 0.992 &  3.34 &  2.99 &  2.39 \\
 $\widehat{\pi}_{n}$       & SPIBB      &       0.582    &  2.29 &  1.86 &  1.43    & 0.973 &  2.97 &  2.61 &  2.15  \\
 $\widehat{\pi}_{c}$       &            &       0.514    &  2.23 &  1.73 &  1.21    & 0.930 &  2.75 &  2.37 &  1.75 \\
          \hline
              $\pi_b$      &            &       0.990 &  2.99 &  2.71 &  2.31    & 1.000 &  3.54 &  3.21 &  2.82 \\
  $\widehat{\pi}_{n}$      & Soft-SPIBB &       0.760    &  2.48 &  \textbf{2.12} &  \textbf{1.71}    & \textbf{0.996} &  3.30 &  \textbf{2.93} &  \textbf{2.47}  \\
  $\widehat{\pi}_{c}$      &            & \textbf{0.785} &  \textbf{2.66} &  \textbf{2.11} &  1.51    & 0.980 &  \textbf{3.45} &  \textbf{2.93} &  2.09 \\
          \hline
                     N/A   & RaMDP      &       0.006    &  0.37 & -0.75 & -0.99    & 0.876  & 3.16 &  2.13 &  0.23  \\
                     N/A   & Double-DQN        &       0.001    & -0.77 & -1.00 & -1.00    & 0.076  & 0.25 & -0.97 & -1.00 \\
    \hline
    \end{tabular}
    \caption{Numerical results for the two size of datasets. The key performance indicators are respectively the percentage of policy improvement over the true baseline, the average performance of the trained policies, the 10\%-quantile, and the 1\%-quantile. For each column, we bold the best performing algorithm that is not using the true baseline \boldmath$\pi_b$.}
    \label{tab:percentage_above_baseline_mean}
\end{table*}

\subsubsection{Competing algorithms:}
\begin{itemize}
    \item $\pi_b$: the true baseline,
    \item $\widehat{\pi}_c$: the pseudo-count-based estimate of the baseline,
    \item $\widehat{\pi}_n$: the neural-based estimate of the baseline,
    \item $\Pi_b$/$\widehat{\Pi}_c$/$\widehat{\Pi}_n$-SPIBB: SPIBB with their respective baselines,
    \item $\Pi_b$/$\widehat{\Pi}_c$/$\widehat{\Pi}_n$-Soft: Soft-SPIBB with their respective baselines,
    \item RaMDP: Double-DQN with Reward-adjusted MDP,
    \item and Double-DQN: basic deep RL algorithm.
\end{itemize}

\subsubsection{Hyper-parameters}
Building on the results presented by \citet{Nadjahi2019}, we set the hyper-parameters for the experiments with $|\mathcal{D}| = 10,000$ ($|\mathcal{D}| = 3,000$) as follows: $\Pi_{b}$-SPIBB with $N_\wedge=3$ ($N_\wedge=1$), $\Pi_{b}$-Soft with $\epsilon=0.6$ ($\epsilon=0.8$), and RaMPD with $\kappa=1$ ($\kappa=1.75$). For the algorithms using an estimated baseline we run a parameter search considering $N_\wedge \in [2, 3, 4, 5]$ ($N_\wedge \in [0.5, 1, 2, 3]$) for SPIBB and $\epsilon \in [0.4, 0.6, 0.8, 1]$ ($\epsilon \in [0.6, 0.8, 1, 1.2, 1.5, 1.8, 2]$) for {Soft-SPIBB} and set the parameters for the main experiments as follows: $\widehat{\Pi}_{n}$-SPIBB  and  $\widehat{\Pi}_{c}$-SPIBB  with $N_\wedge=3.0$ ($N_\wedge=1.0$), and $\widehat{\Pi}_{n}$-Soft and  $\widehat{\Pi}_{c}$-Soft with $\epsilon=0.6$ ($\epsilon=0.8$).

\subsubsection{Performance indicators:}
The plots represent for each algorithm a modified box-plot where the caps show the $10\%$-quantile and $90\%$-quantile, the upper and lower limits of the box are the $25\%$ and $75\%$ quantiles and the middle line in black shows the median. We also show the average of each algorithm (dashed lines in green) and finally add a swarm-plot to enhance the distribution visualization.
The table provides additional details, including the percentage of policies that showed a performance above the average performance of the true baseline policy.

\subsubsection{Results:}
The results are reported numerically in Table~\ref{tab:percentage_above_baseline_mean} and graphically on Figure~\ref{fig:cont_large} for $|\mathcal{D}| = 10,000$ and Figure~\ref{fig:cont_small} for $|\mathcal{D}| = 3,000$.

\textit{Empiric baseline polices.}
On Figure~\ref{fig:cont_large}, we observe that the baseline policies $\widehat{\pi}_{c}$ and $\widehat{\pi}_{n}$ have a performance poorer than the true behavior policy $\pi_b$.
On the one hand, the neural-based baseline estimate $\widehat{\pi}_{n}$ can get values close to the performance of the true behavior policy, however, it has a high variance and even the $90\%$-quantile is below the mean of the true policy.
On the other hand, the count-based policy $\widehat{\pi}_{c}$ has a low variance, but it has a much lower mean performance. In general, we observe a larger performance loss than in finite MDPs between the true baseline and the estimated baseline.

\textit{SPIBB.} With SPIBB, the neural-based baseline estimate leads to better results for all indicators. The loss in average performance makes it worse than RaMDP in the $|\mathcal{D}| = 10,000$ datasets, but it is more reliable and yields more consistently to policy improvements. On the $|\mathcal{D}| = 3,000$ datasets, it demonstrates a higher robustness with respect to the small datasets, still compared to RaMDP.

\textit{Soft-SPIBB.} The Soft-SPIBB results with baseline estimates are impressive. The loss of performance with respect to Soft-SPIBB with the true baseline is minor. We highlight that, although the policy based on pseudo-counts has a lower performance than the true one (1 point difference), it still achieves a strong performance when used with Soft-SPIBB (less than 0.1 point difference). This indicates that the proposed method is robust with respect to the performance of the estimated policy. It seems that Soft-SPIBB changes are much more forgiving the baseline approximations.

\textit{Small dataset.}
The experiment with a small dataset  $|\mathcal{D}|  = 3,000$ (Figure~\ref{fig:cont_small}) aims to evaluate the robustness of these algorithms.
We observe that the estimated policies have a performance even lower than in the experiment with $|\mathcal{D}|  = 10,000$.
While RaMDP's performance indicators dramatically plummet, even largely lower than the behavioural cloning policies,
the algorithm SPIBB using the estimated policies usually returns policies with a performance similar to the true baseline $\pi_b$.
Most exciting, the algorithm Soft-SPIBB manages to improve upon $\pi_b$ with all the baselines policies, obtaining a mean performance above the average performance of $\pi_b$, and a 10\%-quantile slightly lower than that of the true baseline when using the estimated policies.

\textit{Hyper-parameter sensitivity.}
The hyper-parameter search
gave us extra insights on the behavior of the algorithms SPIBB and Soft-SPIBB using estimated baselines.
We noticed that these algorithms do not have a high sensitivity to their hyper-parameters, since the performance is stable in a wide range of values, specially the Soft-SPIBB variations. We sometimes notice a tradeoff that has to be made between variance reduction and expectation maximization.

\section{Conclusion}
\label{sec:conclusions}
This paper addresses the problem of performing safe policy improvement in batch RL without direct access to the baseline, \textit{i.e.} the behavioural policy of the dataset. We provide the first theoretical guarantees for safe policy improvement in this setting, and show on finite and continuous MDPs that the algorithm is tractable and significantly outperforms all competing algorithms that do not have access to the baseline. We also empirically confirm the limits of the approach when the number of trajectories in the dataset is low.

Currently, the limitation of SPIBB methods is the lack of algorithms to compute the parametric uncertainty of the estimated model. \cite{Bellemare2016,Fox2018,Burda2019} investigated some methods for optimism-based exploration, which proved to not be robust enough for pessimism based purpose, where there is a requirement for exhaustiveness. Our future work in priority addresses this issue, but also the multi-batch setting, when there are several sequential updates~\cite{Laroche2019MultiBatch}, extending the method to continuous action spaces~\cite{Kumar2019}, and investigating the use of SPIBB in a full online setting, as a value estimation stabilizer.

\bibliographystyle{ACM-Reference-Format}
\bibliography{biblio}

\end{document}